\documentclass[11pt]{article}

\usepackage[margin=1in]{geometry}
\usepackage[T1]{fontenc}
\usepackage[utf8]{inputenc}
\usepackage{lmodern}
\usepackage{microtype}

\usepackage{tabularx}
\usepackage{amsmath,amssymb,amsthm,mathtools}
\usepackage{bm}
\usepackage{booktabs}
\usepackage{enumitem}
\usepackage{array}
\usepackage[hypertexnames=false]{hyperref}
\usepackage[nameinlink,noabbrev]{cleveref}

\usepackage{multirow}
\usepackage{makecell}
\usepackage{threeparttable}
\usepackage[table]{xcolor}
\usepackage{float}
\usepackage{subfig}
\usepackage{authblk}
\usepackage{algorithm}
\usepackage{algorithmic}

\allowdisplaybreaks
\newtheorem{proposition}{Proposition}

\theoremstyle{remark}
\newtheorem{remark}{Remark}

\newcommand{\meanstd}[2]{$#1 \pm \!\mbox{\scriptsize $#2$}$}

\newenvironment{shrinkfix}
{ \bgroup
	\addtolength\abovedisplayshortskip{0ex}
	\addtolength\abovedisplayskip{0ex}
	\addtolength\belowdisplayshortskip{0ex}
	\addtolength\belowdisplayskip{0ex}}
{\egroup\ignorespacesafterend}

\title{Tail-Aware Geometry Learning for Conformal Ellipsoids}
\author{Xiang Zhang,
}

\affil[1]{School of Physical and Mathematical Sciences, Nanyang Technological University, Singapore}

\date{}

\begin{document}

\maketitle

\begin{abstract}
This paper studies multivariate conformal prediction (CP), a distribution-free uncertainty quantification framework with finite-sample coverage guarantees. The efficiency of multivariate prediction sets hinges critically on the residual geometry encoded by the nonconformity score, while existing minimum-volume methods rely on quantile thresholds that ignore tail residual severity and implicitly bind geometry learning to coverage level. We propose a tail-aware geometry learning framework for conformal ellipsoids that decouples tail sensitivity in geometry learning from the final coverage guarantee. Using a two-split design, we learn the metric matrix via volume minimization under a CVaR constraint on an estimation split,  then apply standard conformal calibration on a held-out calibration split. The resulting problem is convex and admits a bounded-reweighting interpretation that prioritizes high-residual samples. Moreover, we theoretically characterize the trade-off between ellipsoidal volume and tail severity. Experimental results demonstrate the effectiveness of the proposed method.

\textbf{Keywords}: Conformal prediction,  uncertainty quantification, multivariate data, ellipsoids learning
\end{abstract}

\section{Introduction}

Conformal prediction (CP) is a model-agnostic, distribution-free uncertainty quantification framework \cite{vovk2005algorithmic,angelopoulos2023conformal} that yields prediction sets with finite-sample coverage under exchangeability. It has been applied in numerous signal processing tasks, including time series prediction \cite{angelopoulos2023conformal} and anomaly detection \cite{angelopoulos2023conformal}. A core design choice is the nonconformity score, which quantifies how atypical a prediction is relative to observed data. In one dimension, scalar residuals admit a natural total order, so quantiles are well defined. In multivariate regression, however, score design is nontrivial because the score must encode the joint geometry of vector-valued residuals. Consequently, conformal procedures with the same coverage guarantee can produce prediction regions with substantially different shapes and volumes.

Ellipsoidal scores provide a convenient way to model such residue geometry. Common choices include Mahalanobis scores derived from estimated residual covariance \cite{xu2024conformal,braun2025multivariate,messoudi2022ellipsoidal} and its structured variants \cite{dua2026conformal,zhang2026topology},  while recent methods \cite{alon2024learning, braun2025minimum} learn the geometry via minimum-volume objectives 
\cite{henderson2024adaptive,tumu2024multi}. Such formulations are typically governed by an order statistic of the residual scores and can be interpreted as value-at-risk (VaR)-type criteria \cite{duffie1997overview}. Beyond inducing nonconvex optimization, VaR criteria merely delineate a tail boundary: residual severity beyond the active quantile has no impact on the learned geometry. This is limiting when upper-tail residuals exhibit systematic directional structure.

Our key observation is that geometry learning and conformal coverage need not be controlled by the same criterion. The former determines which residual directions shape the score, whereas the latter is enforced afterwards by conformal calibration. We thus use conditional value-at-risk (CVaR) for ellipsoidal geometry learning, which aggregates upper-tail score magnitudes rather than relying solely on the quantile boundary. A tail parameter \(\tau\) controls which residuals are emphasized during learning, while an independent miscoverage level \(\alpha\) determines final coverage, allowing separate tuning of tail sensitivity and coverage. The resulting problem is convex and admits a bounded-reweighting interpretation, where large-score residuals exert greater influence on the learned metric.

CVaR-based minimum-volume ellipsoids have classical precedents \cite{gotoh2008conditional}. Our contribution is instead to use CVaR as a conformal score-learning mechanism followed by an independent calibration step. This also differs from conformal risk-control methods that apply CVaR to downstream losses \cite{yeh2026conformal,chen2026adversarially}: here CVaR shapes prediction-set geometry, with coverage provided by conformal calibration. Finally, our approach differs from existing multivariate CP methods \cite{dheur2025unified}\textemdash including hyperrectangle-based \cite{neeven2018conformal}, density-based \cite{plassier2025probabilistic, sampson2025flexible}, distance-based \cite{xu2024conformal}, copula-based \cite{Sun2022CopulaCP, messoudi2021copula}, and optimal-transport-based approaches \cite{thurin2025optimal}\textemdash in that it employs ellipsoidal scores to encode the joint geometry of vector residuals.

Our contributions are threefold. (\romannumeral1) We formulate tail-aware conformal geometry learning that decouples the tail-sensitivity parameter from the target miscoverage level.  (\romannumeral2) We characterize how upper-tail residuals shape the learned geometry, and derive a convex formulation of the proposed model along with its bounded-reweighting interpretation. (\romannumeral 3) We conduct extensive experiments to demonstrate the effectiveness of the proposed method.

\section{Conformal Ellipsoids: Geometry, Calibration and Coverage}
\label{sec:preliminaries}
Let $\widehat f$ be a predictor trained independently of the calibration
data, and let $\mathcal D_{\mathrm{est}}=\{(X_m,Y_m)\}_{m=1}^M$ and
$\mathcal D_{\mathrm{cal}}=\{(X_n,Y_n)\}_{n=1}^N$
denote disjoint geometry-estimation and calibration sets.
We consider ellipsoidal nonconformity function of the form $S_{\Omega}(X,Y)= r^\top\Omega r$, where $r=Y-\widehat f(X)$ is the residue vector of a sample $(X,Y)$. The metric matrix $\Omega$ determines the relative scaling and orientation of the residual space and is estimated using only 
$\mathcal D_{\mathrm{est}}$. For any  matrix $\widehat\Omega$ estimated from $\mathcal D_{\mathrm{est}}$ and fixed before calibration, define the calibration scores \( \widehat s_n (\widehat \Omega)
= S_{\widehat\Omega}(X_n,Y_n)\).  Let \(k_\alpha =\left\lceil (N+1)(1-\alpha)\right\rceil \) and let $\widehat q_\alpha$ be the $k_\alpha$-th order statistic of \( \{\widehat s_1(\widehat \Omega),\ldots,\widehat s_N(\widehat \Omega),+\infty\}\). 
The corresponding conformal prediction ellipsoid set is
\begin{shrinkfix}
	\begin{align}
		\mathcal Q_{\alpha,\widehat\Omega}(X)
		=
		\left\{
		Y:
		S_{\widehat\Omega}(X,Y)
		\le
		\widehat q_\alpha
		\right\}.
		\label{eq:conformal-set}
	\end{align}
\end{shrinkfix}
The prediction set \eqref{eq:conformal-set} can provide the following coverage validity for test sample $(X_{N+1}, Y_{N+1})$.
\begin{proposition}
	\label{prop:generic_coverage}
	Let $\mathcal F$ contains all information fixed before accessing the calibration and test observations, including
	$\widehat f$ and the information used to estimate
	$\widehat\Omega$.
	If the calibration and test observations are
	exchangeable conditional on $\mathcal F$, 
	\begin{shrinkfix}
		\begin{align}
			\mathbb P
			\left(
			Y_{N+1}
			\in
			\mathcal Q_{\alpha,\widehat\Omega}(X_{N+1})
			\,\middle|\,
			\mathcal F 
			\right)
			\ge
			1-\alpha .
			\label{eq:generic-coverage}
		\end{align}
	\end{shrinkfix}
\end{proposition}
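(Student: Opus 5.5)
The plan is the standard split-conformal exchangeability argument, carried out conditionally on $\mathcal F$. Since $\widehat f$ and $\widehat\Omega$ are $\mathcal F$-measurable, the score map $(x,y)\mapsto S_{\widehat\Omega}(x,y)=(y-\widehat f(x))^\top\widehat\Omega(y-\widehat f(x))$ is a fixed, deterministic function once we condition on $\mathcal F$. Write $s_n=S_{\widehat\Omega}(X_n,Y_n)$ for $n=1,\ldots,N+1$. Applying one fixed measurable function coordinatewise preserves exchangeability, so the hypothesis gives that $(s_1,\ldots,s_{N+1})$ is exchangeable conditional on $\mathcal F$.

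Next, reduce the coverage event to a rank statement. By \eqref{eq:conformal-set}, $Y_{N+1}\in\mathcal Q_{\alpha,\widehat\Omega}(X_{N+1})$ is equivalent to $s_{N+1}\le\widehat q_\alpha$. I would first dispose of the trivial case $k_\alpha>N$, where $\widehat q_\alpha=+\infty$ and coverage holds with probability one. Assume then $k_\alpha\le N$. I would show the deterministic inclusion
\[
\{s_{N+1}\le \widehat q_\alpha\}\supseteq\{s_{N+1}\le s_{(k_\alpha)}^{+}\},
\]
where $s^{+}_{(k)}$ denotes the $k$-th smallest of the augmented list $\{s_1,\ldots,s_{N+1}\}$. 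The justification is that replacing the entry $+\infty$ by $s_{N+1}$ can only decrease order statistics, so $s^{+}_{(k_\alpha)}\le\widehat q_\alpha$. In fact, if $s_{N+1}\le \widehat q_\alpha$ fails, then $s_{N+1}$ exceeds at least $k_\alpha$ of the calibration scores, so its rank among the $N+1$ values is strictly above $k_\alpha$. This gives the contrapositive directly.

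It then remains to show $\mathbb P(s_{N+1}\le s^{+}_{(k_\alpha)}\mid\mathcal F)\ge k_\alpha/(N+1)$. By exchangeability, each index $i$ has the same conditional probability of the event $E_i=\{s_i\le s^{+}_{(k_\alpha)}\}$. Moreover, at least $k_\alpha$ indices satisfy $s_i\le s^{+}_{(k_\alpha)}$ deterministically, and this remains true with ties. Summing, $\sum_i\mathbb P(E_i\mid\mathcal F)=\mathbb E[\#\{i:E_i\}\mid\mathcal F]\ge k_\alpha$, so $\mathbb P(E_{N+1}\mid\mathcal F)\ge k_\alpha/(N+1)\ge 1-\alpha$ by the definition of $k_\alpha$. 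This avoids any continuity assumption.

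The main obstacle is handling ties correctly and justifying exchangeability of the scores conditional on $\mathcal F$. The key point is that $\widehat\Omega$ uses only $\mathcal D_{\mathrm{est}}$, which is disjoint from the calibration and test data, so the scores involve no data-dependent reweighting. The counting argument above handles ties cleanly, so I would use it rather than a uniform-rank argument.
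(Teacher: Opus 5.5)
Your proposal is correct and is the standard split-conformal exchangeability argument that the paper appeals to without writing out. Conditioning on $\mathcal F$ fixes the score map, so the scores are exchangeable, and counting the events $E_i=\{s_i\le s^{+}_{(k_\alpha)}\}$ then gives $k_\alpha/(N+1)\ge 1-\alpha$ with ties handled correctly.
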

This proposition follows directly from the standard split-conformal argument, and states that a calibration procedure delivers coverage validity once the estimated \(\widehat\Omega\) is fixed. Importantly, conformal validity does not require the geometry-learning objective itself to enforce empirical \(1-\alpha\) coverage.

A second useful property is scale invariance. For every $c>0$, \(S_{c\Omega}(X,Y)= c\,S_\Omega(X,Y)\) and \( \widehat q_\alpha(c\Omega)=c\,\widehat q_\alpha(\Omega)\). Hence, \(\mathcal Q_{\alpha,c\Omega}(X)=\mathcal Q_{\alpha,\Omega}(X).\)  Thus, conformal calibration makes the final prediction set invariant to positive global rescaling of \(\Omega\). Only the relative directional scaling and orientation encoded by \(\Omega\) affect its geometry. In the geometry-learning stage below, the unit threshold therefore serves only as a normalization.

\section{Tail-Aware Geometry Learning}
\label{sec:method}

This section first introduces the formulation of order-statistic geometry learning for CP. We then present our proposed method and conclude with the conformal recalibration.

\subsection{Order-Statistic Geometry Learning}
\label{subsec:var_geometry}
Given \( \mathcal D_{\mathrm{est}}\), let \(r_m
=Y_m-\widehat f(X_m)\) and \(s_m(\Omega)=r_m^\top\Omega r_m\).
The normalized unit-threshold ellipsoid has volume
\begin{shrinkfix}
	\begin{align}
		\operatorname{Vol}(\mathcal E_\Omega)
		=
		\lambda(\mathbb B_d)
		\det(\Omega)^{-1/2},
		\quad
		\mathcal E_\Omega
		=
		\{r:r^\top\Omega r\le1\},
		\label{eq:normalized-volume}
	\end{align}
\end{shrinkfix}
where $\mathbb B_d$ denotes the unit Euclidean ball in
$\mathbb R^d$ and $\lambda$ denotes the Lebesgue measure. To distinguish geometry learning from conformal calibration, we
introduce a tail fraction \( \tau\in(0,1)\),  which is independent of the final miscoverage level $\alpha$.  After removing constant terms, traditional minimum-volume geometry learning \cite{braun2025minimum} learns $\Omega$ by minimizing the log-volume of this normalized ellipsoid:
\begin{shrinkfix}
	\begin{align}
		\min_{\Omega}\,
		&
		-\frac12\log\det\Omega \label{P-O}
		\\
		\mathrm{s.t.}\,
		& \operatorname{Card} \{m\in [M]: s_m(\Omega) \leq 1 \} \geq \left\lceil(1-\tau)M\right\rceil,\;
		\Omega\succ 0,
		%\delta I \preceq \Omega \preceq \Delta I ,
		\notag 
	\end{align}
\end{shrinkfix}
where $\operatorname{Card}(\cdot)$ denotes the cardinality functions, and $[M]:=\{1,\cdots,M\}$. 
%The constants $0<\delta< \Delta$ impose lower and upper spectral bounds on $\Omega$, ensuring positive definiteness and preventing unbounded solutions. 
The first constraint is an order-statistic to ensure an empirical $1-\tau$ inclusion requirement on $\mathcal{D}_{\mathrm{est}}$. Thus, problem \eqref{P-O} is an empirical quantile-constrained minimum-volume geometry learner. Conventional methods \cite{alon2024learning, braun2025minimum} typically set \(\tau = \alpha\), which implicitly couples the geometry learning to the final coverage level via the order-statistic constraint. However, Proposition~\ref{prop:generic_coverage} shows that this is not required for conformal validity: the geometry parameter and the final coverage level play different roles.

We next show that problem \eqref{P-O} can be reformulated as a VaR constrained problem. Define the empirical VaR as 
\begin{shrinkfix}
	\begin{align}
		V_\tau(\Omega)
		=
		\widehat{\operatorname{VaR}}_{1-\tau}
		\left(
		s_1(\Omega),\ldots,s_M(\Omega) 
		\right) = s_{(k_{\tau})}(\Omega)
		\label{eq:var-tau}
	\end{align}
\end{shrinkfix}
where $k_{\tau}=\left\lceil(1-\tau)M\right\rceil$ and $s_{(k_{\tau})}(\Omega)$ is the $k_{\tau}-$th smallest value of all scores of $\mathcal{D}_{\mathrm{est}}$. Then, the empirical inclusion constraint in \eqref{P-O} is equivalent to $V_\tau(\Omega) \leq 1$, leading to the following reformulation
\begin{shrinkfix}
	\begin{align}
		\min_{\Omega}\quad
		-\frac12\log\det\Omega, \quad
		\mathrm{s.t.}\;
		V_\tau(\Omega)\le1,
		\;
		\Omega\succ 0.
		%\delta I\preceq\Omega\preceq\Delta I .
		\label{eq:P-V-tau}
	\end{align}
\end{shrinkfix}
Nevertheless, VaR retains two limitations for geometry learning. First, $V_\tau(\Omega)$ is an intermediate order statistic and generally induces a nonconvex optimization problem.  Second, once an observation lies beyond the active quantile, its additional score magnitude does not affect the VaR constraint.  Consequently, geometrically distinct upper-tail
residuals may have similar influence even when their severities differ
substantially.

\subsection{CVaR Geometry Learning}
\label{subsec:cvar_geometry}
In this study, we learn the ellipsoidal geometry using the empirical
CVaR at tail fraction $\tau$, which is defined as
\begin{shrinkfix}
	\begin{align}
		C_\tau(\Omega) :&= \widehat{\text{CVaR}}_{1-\tau}
		\bigl(
		s_1(\Omega),\ldots,s_M(\Omega)
		\bigr) 		\label{eq:cvar-definition}\\
		&={\frac{1}{\tau M}\left( \sum_{i=k_{\tau}+1}^M s_{(i)}(\Omega) + \big(k_{\tau} - (1-\tau)M\big) s_{(k_{\tau})}(\Omega) \right)}.
		\notag
	\end{align}
\end{shrinkfix}
It is observed from \eqref{eq:cvar-definition} that CVaR quantifies the average magnitude in the upper tail of the score distribution, instead of only locating the tail boundary like VaR. This avoids relying only on the single order statistic like \eqref{eq:P-V-tau}. We then estimate the geometry through
\begin{shrinkfix}
	\begin{align}
		\min_{\Omega}\quad
		-\frac12\log\det\Omega,\quad
		\mathrm{s.t.}\;
		C_\tau(\Omega)\le1,
		\;
		\Omega\succ 0.
		%\delta I\preceq\Omega\preceq\Delta I .
		\label{eq:P-C-tau}
	\end{align}
\end{shrinkfix}
Using the Rockafellar--Uryasev representation \cite{rockafellar2000optimization}, the empirical CVaR \eqref{eq:cvar-definition} is equivalent to 
\begin{shrinkfix}
	\begin{align}
		C_\tau(\Omega)
		=
		&\min_{t,\{\xi_m\}_{m=1}^M}\quad
		t+
		\frac{1}{\tau M}
		\sum_{m=1}^M\xi_m
		\notag\\
		&\mathrm{s.t.}\quad
		\xi_m
		\ge
		r_m^\top\Omega r_m-t,
		\quad
		\xi_m\ge0,
		\quad
		m\in[M].
		\label{eq:cvar-epi}
	\end{align}
\end{shrinkfix}
Equivalently, the problem \eqref{eq:P-C-tau} can be rewritten as 
\begin{shrinkfix}
	\begin{align}
		\min_{\Omega,t,\xi}\quad&
		-\frac12\log\det\Omega
		\notag\\
		\mathrm{s.t.}\quad&
		t+
		\frac{1}{\tau M}
		\sum_{m=1}^M\xi_m
		\le1,\;\;	\Omega\succ 0
		%\delta I\preceq\Omega\preceq\Delta I,
		\notag\\
		&
		\xi_m\ge
		r_m^\top\Omega r_m-t,\;\;
		\xi_m\ge0, \;\; m \in [M]
		\label{eq:P-C-epi-tau}
	\end{align}
\end{shrinkfix}
Since $r_m^\top\Omega r_m$ is affine in $\Omega$, the objective is
convex and all constraints define convex sets.
Problem~\eqref{eq:P-C-epi-tau} can therefore be solved to global
optimality using standard convex optimization methods
\cite{boyd2004convex}.

Here, the CVaR ellipsoid is not used directly as the final uncertainty region. Instead, $\widehat\Omega_\tau$ defines the geometry of the nonconformity score, while the final conformal threshold is determined independently on the calibration split at level $\alpha$. Thus, $\tau$ controls which upper-tail residuals shape the learned geometry, whereas $\alpha$ controls the final conformal miscoverage level.

\begin{remark}
	The learned $\widehat{\Omega}$ here is a global metric shared across all residuals. It can be extended to locally adaptive variants, along the lines of \cite{alon2024learning, braun2025minimum}, which are based on the VaR criterion.  However, this falls outside the scope of this work and is left for future research.
\end{remark}

\noindent\textbf{Bounded-reweighting interpretation.}
The empirical CVaR admits the dual representation
\cite{rockafellar2002conditional}
\begin{shrinkfix}
	\begin{align}
		C_\tau(\Omega)
		= \max_{\eta\in\mathcal P_\tau}
		\langle\Omega,\Sigma_\eta\rangle.
		\label{eq:cvar-dual}
	\end{align}
\end{shrinkfix}
where
\begin{shrinkfix}
	\begin{align}
		\mathcal P_\tau
		=
		\left\{
		\eta\ge0:
		\mathbf 1^\top\eta=1,\;
		\eta_m\le\frac{1}{\tau M}
		\right\}, \;\Sigma_\eta
		=
		\sum_{m=1}^M
		\eta_m r_mr_m^\top.
		\label{eq:cvar-weight-set}
	\end{align}
\end{shrinkfix}
The corresponding problem \eqref{eq:P-C-tau} can be rewritten as 
\begin{shrinkfix}
	\begin{align}
		\widehat{\Omega}_{\tau} \in\; &\arg\min_{\Omega\succ 0}\;
		-\frac12\log\det\Omega,\notag\\
		&\mathrm{s.t.} \quad
		C_{\tau}(\Omega) = \max_{\eta\in\mathcal P_\tau}
		\langle\Omega,\Sigma_\eta\rangle \leq 1.
		\label{eq-dual-cvar}
	\end{align}
\end{shrinkfix}
Then, for problem \eqref{eq-dual-cvar}, we have the following proposition.
\begin{proposition}
	\label{prop:minvol-cvar}
	Suppose that \eqref{eq-dual-cvar} admits an optimum  and that the empirical residuals span $\mathbb R^d$. Then, there exists an optimal weight $\eta^\star \in \arg\max_{\eta\in\mathcal P_\tau}\left\langle\widehat\Omega_\tau,\Sigma_\eta\right\rangle$ such that 
	\begin{shrinkfix}
		\begin{align}
			\widehat\Omega_\tau^{-1}
			=
			d\Sigma_{\eta^\star}
			=
			d\sum_{m=1}^{M}
			\eta_m^\star r_m r_m^\top.
			\label{eq:omega-inverse-main}
		\end{align}
	\end{shrinkfix}
\end{proposition}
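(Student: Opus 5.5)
We argue from the KKT conditions of the convex problem \eqref{eq-dual-cvar}. Write $f(\Omega)=-\tfrac12\log\det\Omega$ and $g(\Omega)=C_\tau(\Omega)-1$. By \eqref{eq:cvar-dual}, $C_\tau$ is a pointwise maximum over the compact polytope $\mathcal P_\tau$ of linear functions $\Omega\mapsto\langle\Omega,\Sigma_\eta\rangle$. Hence $g$ is convex and finite on the symmetric matrices, and Danskin's theorem gives its subdifferential:
\[
\partial C_\tau(\Omega)=\operatorname{conv}\bigl\{\Sigma_\eta:\eta\in\arg\max_{\eta'\in\mathcal P_\tau}\langle\Omega,\Sigma_{\eta'}\rangle\bigr\}.
\]
The argmax set is a nonempty convex face of $\mathcal P_\tau$, and $\eta\mapsto\Sigma_\eta$ is linear. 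So this convex hull equals $\{\Sigma_\eta:\eta\in\arg\max\}$ itself, and every subgradient is of the form $\Sigma_{\eta^\star}$ for a single maximizing $\eta^\star$.

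Next we verify Slater's condition so that KKT is necessary at the optimum $\widehat\Omega_\tau$. Take $\Omega=\epsilon I$ with $\epsilon>0$ small. Then $C_\tau(\epsilon I)=\epsilon\,C_\tau(I)\le\epsilon\max_m\|r_m\|^2<1$, so $\epsilon I$ is strictly feasible and lies in the open domain $\Omega\succ0$. Since the domain is open, no extra normal-cone term appears. There is therefore a multiplier $\mu\ge0$ with
\[
0\in-\tfrac12\widehat\Omega_\tau^{-1}+\mu\,\partial C_\tau(\widehat\Omega_\tau),\qquad \mu\bigl(C_\tau(\widehat\Omega_\tau)-1\bigr)=0 .
\]
By the first step, this reads $\tfrac12\widehat\Omega_\tau^{-1}=\mu\Sigma_{\eta^\star}$ for some $\eta^\star\in\arg\max_{\eta\in\mathcal P_\tau}\langle\widehat\Omega_\tau,\Sigma_\eta\rangle$.

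The main step is to pin down $\mu$. First, $\mu\neq0$, since otherwise $\widehat\Omega_\tau^{-1}=0$, which is impossible. So the constraint is active: $C_\tau(\widehat\Omega_\tau)=\langle\widehat\Omega_\tau,\Sigma_{\eta^\star}\rangle=1$. Now take the trace inner product of the stationarity equation with $\widehat\Omega_\tau$:
\[
\tfrac12\,\mathrm{tr}(I)=\tfrac d2=\mu\langle\widehat\Omega_\tau,\Sigma_{\eta^\star}\rangle=\mu .
\]
Thus $\mu=d/2$, and $\widehat\Omega_\tau^{-1}=d\,\Sigma_{\eta^\star}$, which is \eqref{eq:omega-inverse-main}.

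Alternatively, one can avoid KKT and derive $\mu$ from scale invariance: along the ray $c\widehat\Omega_\tau$, the map $c\mapsto-\tfrac d2\log c$ must be minimized at $c=1$ subject to $c\,C_\tau(\widehat\Omega_\tau)\le1$, which forces activity. The spanning assumption on the residuals serves only to make the problem well posed. It gives $\Sigma_\eta\succ0$ whenever $\eta$ has enough support, and in particular it makes the resulting $\Sigma_{\eta^\star}$ invertible, consistent with the equation. We expect the main obstacle to be the subdifferential step: stating Danskin's formula carefully for the nonsmooth max and checking that the convex hull collapses to a single $\Sigma_{\eta^\star}$ with $\eta^\star$ in the argmax.
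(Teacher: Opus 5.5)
Your proposal is correct and follows essentially the paper's argument: Danskin's formula with the convex hull collapsing to $\{\Sigma_\eta:\eta\in\mathcal A_\tau(\widehat\Omega_\tau)\}$, Slater's condition via $\varepsilon I$, KKT stationarity with a strictly positive multiplier, and the trace identity with $\langle\widehat\Omega_\tau,\Sigma_{\eta^\star}\rangle=1$ to fix the multiplier at $d/2$. The only difference is how you get activity of the CVaR constraint: you read it off complementary slackness once $\mu>0$, whereas the paper proves it separately by the scaling argument. As your aside suggests, that scaling argument replaces only the activity step, not the KKT stationarity needed to produce $\eta^\star$.
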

\begin{proof}
	See Appendix in the full paper.
\end{proof}

Proposition~\ref{prop:minvol-cvar} provides an explicit link between tail-aware geometry learning and minimum-volume ellipsoidal geometry. Specifically, CVaR identifies a tail-focused residual scatter matrix through the optimal weights $\eta^\star$, where the maximizing weights place greater emphasis on observations with large current scores. The parameter $\tau$ controls the concentration of this reweighting:  a smaller $\tau$ increases the allowable per-sample weight $1/(\tau M)$ and permits greater concentration on a smaller subset of upper-tail residuals. The minimum-volume objective then converts the resulting tail-weighted scatter matrix into the optimal precision metric via $\widehat{\Omega}_\tau^{-1}=d\Sigma_{\eta^\star}$. Thus, the orientation and relative axis lengths of the ellipsoid are determined by the active tail-weighted residual geometry rather than the unweighted covariance.

\subsection{Conformal Recalibration}

After estimating $\widehat\Omega_\tau$ from
$\mathcal D_{\mathrm{est}}$, we keep it fixed and compute \( \widehat s_n = r_n^\top \widehat\Omega_\tau r_n\)
on the independent calibration set $\mathcal D_{\mathrm{cal}}=\{(X_n,Y_n)\}_{n=1}^N$. The finite-sample corrected
conformal quantile $\widehat q_\alpha$ is then computed at
miscoverage level $\alpha$, and the final prediction region is
\begin{shrinkfix}
	\begin{align}
		\mathcal Q_{\alpha,\tau}(X)
		=
		\left\{
		Y:
		(Y-\widehat f(X))^\top
		\widehat\Omega_\tau
		(Y-\widehat f(X))
		\le
		\widehat q_\alpha
		\right\}.
		\label{eq:final-set}
	\end{align}
\end{shrinkfix}
By Proposition~\ref{prop:generic_coverage}, for a new test sample $(X_{N+1}, Y_{N+1})$,
\begin{shrinkfix}
	\begin{align}
		\mathbb P
		\left(
		Y_{N+1}
		\in
		\mathcal Q_{\alpha,\tau}(X_{N+1})
		\right)
		\ge
		1-\alpha ,
		\label{eq:final-coverage}
	\end{align}
\end{shrinkfix}
provided that $\widehat\Omega_\tau$ is fixed before the calibration data are used.
Therefore, changing $\tau$ alters the learned geometry without
changing the nominal conformal coverage level, while changing
$\alpha$ rescales the final prediction region without requiring the
geometry to be relearned.

\begin{algorithm}[t]
	\caption{CVaR-based Ellipsoidal Conformal Prediction
	}
	\label{alg}
	\begin{algorithmic}[1]
		\REQUIRE   
		Trained predictor $\widehat f$,
		estimation set $\mathcal D_{\mathrm{est}}$,
		calibration set $\mathcal D_{\mathrm{cal}}$,
		tail parameter $\tau$,
		and miscoverage level $\alpha$.
		\STATE Estimate $\widehat{\Omega}_{\tau}$ using
		$\mathcal D_{\mathrm{est}}$ by solving \eqref{eq:P-C-epi-tau}
		\STATE Compute $\widehat s_n(\widehat{\Omega}_{\tau})=r_n^\top\widehat{\Omega}_{\tau}r_n $ for samples in $\mathcal D_{\mathrm{cal}}$
		
		\STATE Compute the conformal  quantile $
		\widehat{q}_{\alpha}$ based on $\{\widehat s_n\}_{n=1}^N$.

		\STATE Construct the prediction set for $X_{N+1}$ based on \eqref{eq:final-set}
		
		\ENSURE The prediction set $Q_{\alpha,\tau}(X_{N+1})$ for $X_{N+1}$
		
	\end{algorithmic}
\end{algorithm}

\section{Geometry--Tail Characterization}
\label{sec:theory}

In this section, we theoretically characterize the trade-off between ellipsoidal volume and tail risk severity for the proposed method. First, for any $\Omega$ with $V_\tau(\Omega)>0$, define the tail-severity ratio
\begin{shrinkfix}
	\begin{align}
		\Gamma_\tau(\Omega)
		=
		\frac{C_\tau(\Omega)}
		{V_\tau(\Omega)}
		\ge1.
		\label{eq:gamma-tau}
	\end{align}
\end{shrinkfix}
This ratio measures the relative severity of scores beyond the VaR boundary: a value close to one indicates that upper-tail scores remain near the boundary, whereas a larger value indicates more severe tail excursions.  Let $\Omega_V^\star$ and $\Omega_C^\star$ denote optimal solutions of
the VaR-based \eqref{eq:P-V-tau} and CVaR-based
\eqref{eq:P-C-tau} geometry-learning problems at the same tail fraction $\tau$, respectively, and define
\begin{shrinkfix}
	\begin{align}
		\Gamma_{\tau,V}
		=
		\Gamma_\tau(\Omega_V^\star),
		\qquad
		\Gamma_{\tau,C}
		=
		\Gamma_\tau(\Omega_C^\star),
	\end{align}
\end{shrinkfix}
as the tail-severity ratio of $\Omega_V^\star$ and $\Omega_C^\star$, respectively. We further introduce the VaR-normalized geometry ellipsoids
\begin{shrinkfix}
	\begin{align}
		\mathcal E_{\tau,j} = \left\{ r: r^\top\Omega_j^\star r \le V_\tau(\Omega_j^\star) \right\},
		\qquad j\in\{V,C\}.
		\label{eq:var-normalized-ellipsoids}
	\end{align}
\end{shrinkfix}
The set $\mathcal E_{\tau,j}$ defines an ellipsoidal region induced by the metric matrix $\Omega_j^\star$,  with its overall scale determined by the threshold $V_\tau(\Omega_j^\star)$. Hence, $\Omega_j^\star$  characterizes the shape and orientation of the ellipsoid, while $V_\tau(\Omega_j^\star)$ controls its size at risk level $\tau$. Then, we have the following proposition.
\begin{proposition}
	\label{thm:geometry-tail-tradeoff}
	Suppose that problems \eqref{eq:P-V-tau} and \eqref{eq:P-C-tau} admit optimal solutions, and that the empirical residuals span $\mathbb R^d$. Then the ellipsoids associated with $\Omega_V^\star$ and $\Omega_C^\star$ satisfy
	\begin{shrinkfix}
		\begin{align}
			1
			\le
			\frac{
				\operatorname{Vol}(\mathcal E_{\tau, C})
			}{
				\operatorname{Vol}(\mathcal E_{\tau, V})
			}
			\le
			\left(
			\frac{
				\Gamma_{\tau,V}
			}{
				\Gamma_{\tau,C}
			}
			\right)^{d/2}.
			\label{eq:geometry-tail-bound}
		\end{align}
	\end{shrinkfix}
	If
	\(
	\operatorname{Vol}(\mathcal E_{\tau,C})
	=
	\rho\,
	\operatorname{Vol}(\mathcal E_{\tau,V})
	\)
	for $\rho\ge1$, then
	\begin{shrinkfix}
		\begin{align}
			\Gamma_{\tau,C}
			\le
			\rho^{-2/d}
			\Gamma_{\tau,V}.
			\label{eq:gamma-volume-relation}
		\end{align}
	\end{shrinkfix}
\end{proposition}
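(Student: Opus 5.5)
The plan rests on three facts: the volume formula \eqref{eq:normalized-volume}, the scale behaviour of $V_\tau$ and $C_\tau$, and the feasibility comparison between \eqref{eq:P-V-tau} and \eqref{eq:P-C-tau}. By \eqref{eq:normalized-volume}, the VaR-normalized ellipsoid \eqref{eq:var-normalized-ellipsoids} has volume
\[
\operatorname{Vol}(\mathcal E_{\tau,j})=\lambda(\mathbb B_d)\,V_\tau(\Omega_j^\star)^{d/2}\det(\Omega_j^\star)^{-1/2}.
\]
This quantity is invariant under $\Omega_j^\star\mapsto c\,\Omega_j^\star$, since $V_\tau$ is positively homogeneous of degree one. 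So everything reduces to comparing $\det(\Omega/V_\tau(\Omega))$ across the two solutions. The spanning assumption guarantees $V_\tau>0$ and $C_\tau>0$ at any $\Omega\succ0$. Together with the existence of optima, it lets me treat both constraints as active: otherwise I could scale $\Omega$ up and decrease $-\tfrac12\log\det\Omega$. This gives $V_\tau(\Omega_V^\star)=1$ and $C_\tau(\Omega_C^\star)=1$.

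\textbf{Lower bound.} Set $\widetilde\Omega=\Omega_C^\star/V_\tau(\Omega_C^\star)$. Then $V_\tau(\widetilde\Omega)=1$, so $\widetilde\Omega$ is feasible for \eqref{eq:P-V-tau}. Optimality of $\Omega_V^\star$ gives $\det\Omega_V^\star\ge\det\widetilde\Omega$. By scale invariance,
\[
\operatorname{Vol}(\mathcal E_{\tau,C})=\lambda(\mathbb B_d)\det(\widetilde\Omega)^{-1/2}\ge\lambda(\mathbb B_d)\det(\Omega_V^\star)^{-1/2}=\operatorname{Vol}(\mathcal E_{\tau,V}),
\]
which is the left inequality of \eqref{eq:geometry-tail-bound}.

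\textbf{Upper bound.} Run the same argument the other way. Set $\widehat\Omega=\Omega_V^\star/C_\tau(\Omega_V^\star)$. Then $C_\tau(\widehat\Omega)=1$, so $\widehat\Omega$ is feasible for \eqref{eq:P-C-tau}, and hence $\det\Omega_C^\star\ge\det\widehat\Omega=C_\tau(\Omega_V^\star)^{-d}\det\Omega_V^\star$. Using the active constraints, $V_\tau(\Omega_V^\star)=1$ gives $C_\tau(\Omega_V^\star)=\Gamma_{\tau,V}$, and $C_\tau(\Omega_C^\star)=1$ gives $V_\tau(\Omega_C^\star)=1/\Gamma_{\tau,C}$. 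Substituting,
\[
\operatorname{Vol}(\mathcal E_{\tau,C})=\lambda(\mathbb B_d)\,\Gamma_{\tau,C}^{-d/2}\det(\Omega_C^\star)^{-1/2}\le\lambda(\mathbb B_d)\,\Gamma_{\tau,C}^{-d/2}\,\Gamma_{\tau,V}^{d/2}\det(\Omega_V^\star)^{-1/2}.
\]
Dividing by $\operatorname{Vol}(\mathcal E_{\tau,V})=\lambda(\mathbb B_d)\det(\Omega_V^\star)^{-1/2}$ gives the right inequality.

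\textbf{Relation \eqref{eq:gamma-volume-relation}.} Substitute $\rho$ for the ratio in the upper bound to get $\rho\le(\Gamma_{\tau,V}/\Gamma_{\tau,C})^{d/2}$. Raising both sides to the power $2/d$ and rearranging gives $\Gamma_{\tau,C}\le\rho^{-2/d}\Gamma_{\tau,V}$.

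\textbf{Main obstacle.} The delicate step is justifying that the constraints are active, i.e.\ $V_\tau(\Omega_V^\star)=1$ and $C_\tau(\Omega_C^\star)=1$, and that $V_\tau(\Omega_C^\star)>0$ so the ratios are well defined. Activeness follows from homogeneity plus strict monotonicity of $\log\det$ under scaling. Positivity of the VaR needs a little care. Spanning alone does not rule out $V_\tau(\Omega)=0$ if more than $k_\tau$ residuals are zero, so I would assume, or note as implicit in the statement, that fewer than $k_\tau$ residuals vanish; this matches the requirement $V_\tau(\Omega)>0$ stated in the definition \eqref{eq:gamma-tau}.
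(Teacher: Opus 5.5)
Your argument is correct and is essentially the paper's proof. It shows both constraints are active by scaling, then uses feasibility of $\Omega_C^\star/V_\tau(\Omega_C^\star)=\Gamma_{\tau,C}\Omega_C^\star$ for \eqref{eq:P-V-tau} and of $\Omega_V^\star/C_\tau(\Omega_V^\star)=\Omega_V^\star/\Gamma_{\tau,V}$ for \eqref{eq:P-C-tau}, and compares the resulting determinants.

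One adjustment concerns positivity. Spanning does not give $V_\tau>0$, so drop that claim from your first paragraph. You also do not need an extra hypothesis on vanishing residuals. If $V_\tau(\widetilde\Omega)=0$ for some $\widetilde\Omega\succ0$, then $c\widetilde\Omega$ is feasible for \eqref{eq:P-V-tau} for every $c>0$ and the objective is unbounded below, contradicting the assumed existence of a VaR optimum. This is how the paper secures $V_\tau(\Omega_C^\star)>0$.
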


\begin{proof}
	See Appendix in the full paper.
\end{proof}
Proposition~\ref{thm:geometry-tail-tradeoff} quantifies the trade-off between geometric compactness and tail sensitivity. The VaR solution provides the smallest VaR-normalized volume, while the CVaR solution achieves a no-worse tail-severity ratio, $\Gamma_{\tau,C}\le\Gamma_{\tau,V}$. Moreover, any increase in the CVaR-based normalized volume must be compensated by a corresponding reduction in tail severity according to \eqref{eq:gamma-volume-relation}. Thus, CVaR-based learning may allocate more geometric capacity to directions containing severe residual excursions rather than merely optimizing the VaR boundary. This result provides a theoretical rationale for using CVaR to learn a more tail-aware residual geometry. Importantly, it concerns the geometry-estimation stage and does not directly imply a smaller final conformal region, whose scale is determined by the miscoverage level $\alpha$ during calibration. Accordingly, Section~\ref{sec:numerical experiments} evaluates whether this geometry-level tail improvement translates into favorable calibrated volume behavior at coverage level $1-\alpha$.

\section{Experiments}
\label{sec:numerical experiments}

\begin{table}[t]
	\centering \small
	\caption{Experimental results on synthetic data.}
	\label{tab:synthetic data}
	\setlength{\tabcolsep}{15pt}
	\renewcommand{\arraystretch}{1.2}
	\begin{tabular}{ccccc}
		\toprule
		$\alpha$  & Method & Coverage & Efficiency $\downarrow$  & Severity $\downarrow$  \\
		\midrule
		\multirow{5}{*}{\rotatebox{90}{0.1}}
		
		& \textsc{Naive}  &  \meanstd{0.9002}{0.0065}  & \meanstd{3.9105}{0.0247} & \meanstd{1.2434}{0.0219}  \\
		&\textsc{Cov}  &  \meanstd{0.9002}{0.0055} &  \meanstd{3.6929}{0.0159} &  \meanstd{1.1854}{0.0139} \\
		&  \textsc{MVCS}  & \meanstd{0.8998}{0.0060} &  \meanstd{\textbf{3.6081}}{0.0166} &  \meanstd{1.1893}{0.0151}  \\
		\rowcolor{gray!20}
		& \textsc{Ours}: $ 0.1$  &   \meanstd{0.9000}{0.0059} & \meanstd{3.6128}{0.0160}&  \meanstd{1.1691} {0.0144}\\
		\rowcolor{gray!20}
		&\textsc{Ours}: $ 0.08$  &   \meanstd{0.9001}{0.0059} & \meanstd{\underline{3.6082}}{0.0156}&  \meanstd{ \underline{1.1665}}{0.0163}\\
		\rowcolor{gray!20}
		&\textsc{Ours}: $ 0.05$  &   \meanstd{0.9001}{0.0061} & \meanstd{3.6250}{0.0160}&  \meanstd{\textbf{ 1.1652}}{0.0177}\\
		\midrule
		\multirow{5}{*}{\rotatebox{90}{0.05}}
		& \textsc{Naive}  &  \meanstd{0.9493}{0.0050}  & \meanstd{4.1995}{0.0305} & \meanstd{1.1940}{0.0244}  \\
		&\textsc{Cov}  &  \meanstd{0.9501}{0.0040} &  \meanstd{3.8849}{0.0183} &  \meanstd{1.1562}{0.0173} \\
		&  \textsc{MVCS}  & \meanstd{0.9499}{0.0046} &  \meanstd{\underline{3.8430}}{0.0185} &  \meanstd{1.1664}{0.0193}  \\
		\rowcolor{gray!20}
		&\textsc{Ours}: $0.1$ &   \meanstd{0.9501}{0.0043} & \meanstd{3.8455}{0.0180}&  \meanstd{1.1551}{0.0179} \\
		\rowcolor{gray!20}
		& \textsc{Ours}: $0.08$ & \meanstd{0.9501}{0.0043} & \meanstd{\underline{3.8430}}{0.0183} & \meanstd{\underline{1.1541}}{0.0185} \\
		\rowcolor{gray!20}
		& \textsc{Ours}: $0.05$  &   \meanstd{0.9501}{0.0045} & \meanstd{\textbf{ 3.8403}}{0.0192}&  \meanstd{\textbf{1.1534}}{0.0190} \\
		\bottomrule
		\multicolumn{5}{l}{\scriptsize $\downarrow$: lower is better; bold: best result; underlined: second-best result.}
	\end{tabular}
\end{table}

In this section, we evaluate the proposed method on synthetic and real-world data.

\subsection{Synthetic Data.}
We consider a bivariate regression problem with
$X\sim\mathrm{Uniform}(0,1)$ and
$Y=f^*(X)+A(X)\epsilon\in\mathbb{R}^2$, where
$f_1^*(X)=X+\sin(2\pi X)$ and
$f_2^*(X)=X^2+\cos(2\pi X)$.
The noise is
$\epsilon=(z_1,z_2+0.4z_1^2-0.4\sigma_1^2)$,
where $z_1\sim\mathcal{N}(0,0.1)$,
$z_2\sim\mathcal{N}(0,0.02)$, and $\sigma_1^2$ is the variance of $z_1$.
To introduce a rare anisotropic high-variance regime, we set
$A(X)=I$ for $X\le0.92$ and
\(
A(X)=R_{\pi/4}
\begin{pmatrix}
	1&0\\
	0&5
\end{pmatrix}
R_{\pi/4}^{\top}
\)
otherwise, where $R_{\pi/4}$ is the rotation matrix with angle $\pi/4$. Thus, the amplified regime occurs with probability $0.08$. We generate 20,000 samples and split them into training, estimation,
calibration, and test sets with proportions $0.4$, $0.2$, $0.2$, and $0.2$. A two-layer MLP is fitted on the training set, while the estimation and calibration sets are used for score construction and split conformal
calibration, respectively. We compare \textsc{Ours} with \textsc{Naive}, which uses the $\ell_2$ norm;
\textsc{Cov}, which uses the Mahalanobis distance based on empirical covariance \cite{xu2024conformal}; and \textsc{MVCS}, which learns a minimum-volume ellipsoid under a VaR constraint \cite{braun2025minimum}. For our method, we let $\tau = 0.1,0.08$ and $0.05$, respectively. The evaluation metrics are Coverage and Efficiency, where Efficiency is defined as the $d$-th root of the conformal ellipsoid volume. Moreover, we employ a new metric termed Severity:
\begin{shrinkfix}
	\begin{align}
		\mathrm{Severity}= \widehat{\operatorname{CVaR}}_{1-\alpha}(\overline{s}_1, \cdots, \overline{s}_J)
	\end{align}
\end{shrinkfix}
where $J$ is the number of test samples. Severity measures the average magnitude of the worst \(\alpha\)-tail of normalized scores \(\overline{s}_j = s_j / \widehat{q}_\alpha\) and therefore quantifies how severe extreme prediction outcomes are relative to the calibrated boundary. It can also be interpreted as an empirically calibrated counterpart to the tail-severity ratio \(\Gamma_{\alpha}\) in Proposition \ref{thm:geometry-tail-tradeoff} across calibration and test stages.

Table~\ref{tab:synthetic data} reports results averaged over 100 simulations.  All methods achieve empirical coverage close to the nominal \(1-\alpha\), consistent with Proposition~1 that the independent calibration step ensures coverage guarantees for any fixed \(\widehat{\Omega}\). At $\alpha = 0.1$, \textsc{MVCS} and our method have nearly identical efficiency, while ours achieves lower tail severity. Notably, $\tau = 0.08$ yields the best efficiency, matching the  8\% high-variance population. Since this proportion is below the allowed miscoverage level, a VaR-based objective can favor compact sets while underweighting severe tail losses, whereas CVaR remains sensitive beyond the target quantile. At $\alpha = 0.05$, the allowed miscoverage falls below the rare-regime probability, requiring the geometry to accommodate part of the high-variance population. Our method then achieves both the smallest sets and the lowest severity, with $\tau = 0.05$ performing best on average. This highlights the benefit of tail-aware geometry learning when the coverage constraint extends into the rare regime. The gap may also partly reflect the non-convex optimization of the VaR-based formulation. Overall, CVaR maintains competitive efficiency while providing better tail control.

Fig.\ref{fig:visualization} further provides the visualization of the obtained residues and the coverage area of MVCS and our method. We let $\alpha = 0.1$ and $\tau = 0.1$. We can observe that, compared with MVCS, our method produces a conformal region that is better aligned with the residual distribution. By explicitly emphasizing high-loss samples in the tail, the CVaR objective improves sensitivity to extreme residuals and encourages a more risk-aware region.

\begin{figure}[t]
	\centering
	\subfloat[]{
		\includegraphics[width=0.45\linewidth]{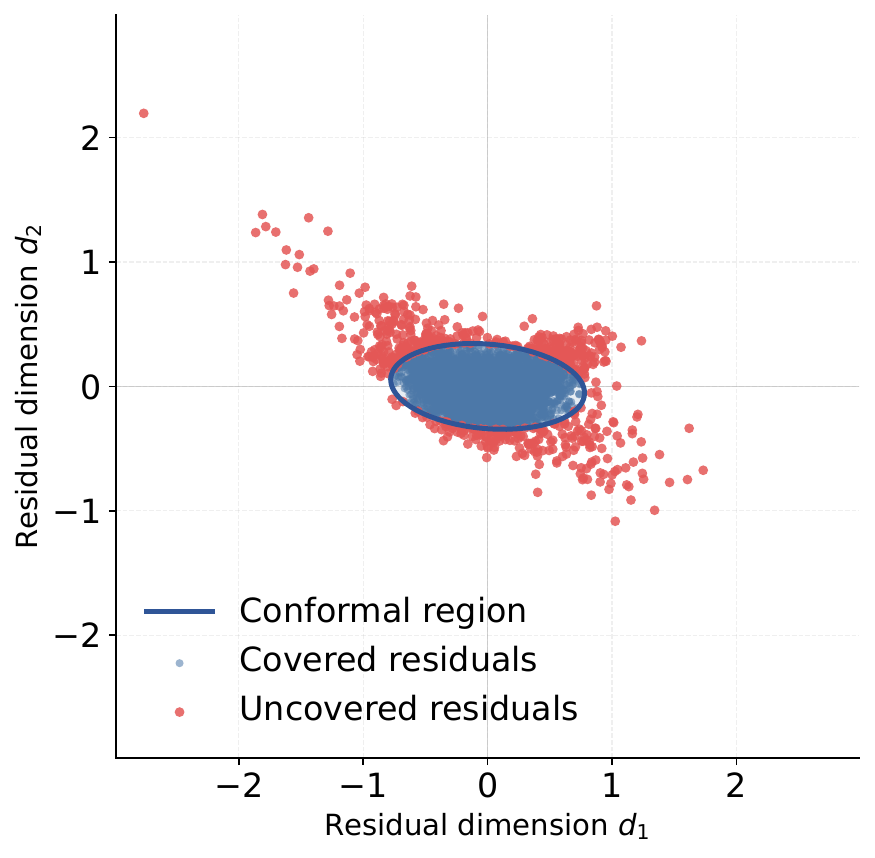}
	}
	\hfill
	\subfloat[]{
		\includegraphics[width=0.45\linewidth]{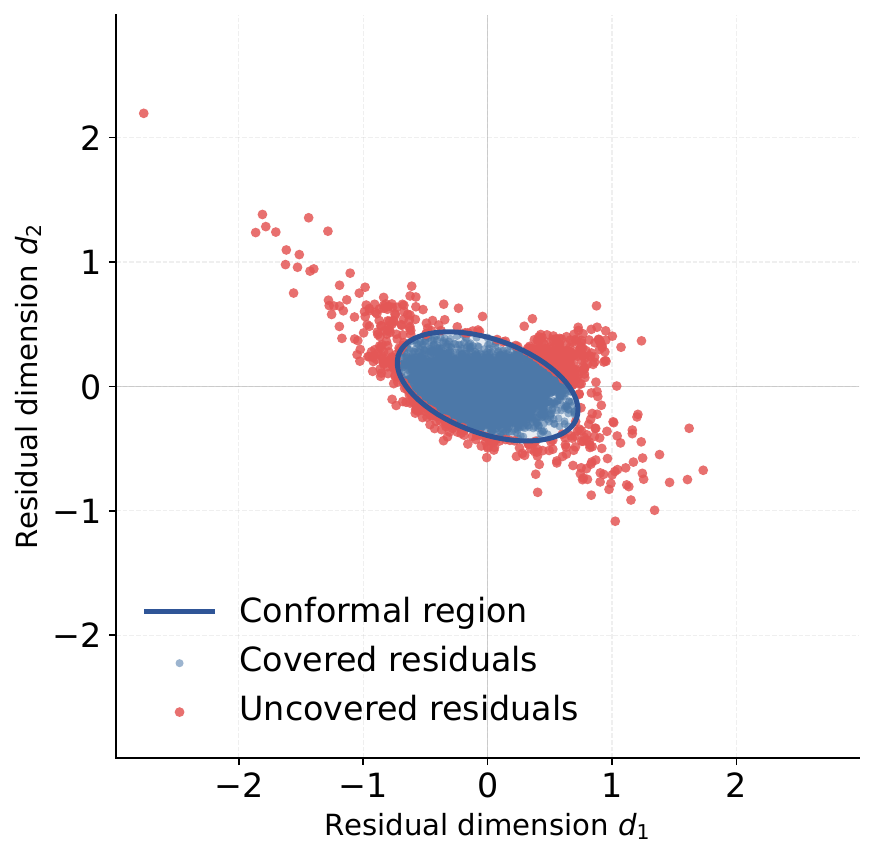}
	}
	\caption{Visualization of the residuals and coverage: (a) MVCS; (b) Our method.}
	\label{fig:visualization}
\end{figure}

\subsection{Real Data.}
We further test the proposed method on three signal processing benchmark datasets: Gas \cite{gas2019uci}, Protein \cite{protein2013uci}, and House \cite{feldman2023calibrated}. These datasets are used for multivariate regression, with a two-layer MLP serving as the base predictor. Further details regarding the datasets and the base predictor are provided in the Appendix of the full paper due to space constraints. We let $\alpha = 0.1$, and the results are reported in Table~\ref{tab:real data}. Across all datasets and choices of $\tau$, the empirical coverage remains close to the nominal level of $0.9$, consistent with the marginal coverage guarantee of Proposition \ref{prop:generic_coverage}. Compared with \textsc{MVCS}, our method incurs only a modest increase in prediction-set size while substantially reducing miscoverage severity.  Moreover, decreasing $\tau$ from $0.1$ to $0.05$ further reduces severity on all three datasets, at the cost of a  decrease in efficiency, while leaving coverage essentially unchanged. This behavior illustrates the role of $\tau$ in controlling the efficiency--severity trade-off independently of the target coverage level $\alpha$. Overall, the results show that the proposed CVaR-based construction preserves conformal coverage and competitive set efficiency while providing substantially improved control over severe miscoverage.

\begin{table}[t]
	\centering \small
	\caption{Results on real data: $\alpha = 0.1$.}
	\label{tab:real data}
	\setlength{\tabcolsep}{15pt}
	\renewcommand{\arraystretch}{1.2}
	\begin{tabular}{ccccc}
		\toprule
		& Method & Coverage & Efficiency $\downarrow$  & Severity $\downarrow$  \\
		\midrule
		\multirow{4}{*}{\rotatebox{90}{Gas}}
		& \textsc{Naive}  &  \meanstd{0.8980}{0.0047}  & \meanstd{12.4092}{0.1967} & \meanstd{3.9242}{0.1770}  \\
		&\textsc{Cov}  &  \meanstd{0.8982}{0.0058} &  \meanstd{8.5057}{0.1217} &  \meanstd{4.0354}{0.4502} \\
		&  \textsc{MVCS}  & \meanstd{0.8992}{0.0058} &  \meanstd{\textbf{ 7.5024}}{0.1229} &  \meanstd{4.0410}{0.4537}  \\
		\rowcolor{gray!20}
		& \textsc{Ours}: $0.1$  &   \meanstd{0.8990}{0.0053} & \meanstd{\underline{7.7740}}{0.1287}&  \meanstd{\underline{ 2.2924}}{0.1595} \\
		\rowcolor{gray!20}
		& \textsc{Ours}: $0.05$  &   \meanstd{0.8994}{0.0056} & \meanstd{7.8556}{0.1298}&  \meanstd{\textbf{1.9917}}{0.1848} \\

		\midrule
		\multirow{4}{*}{\rotatebox{90}{Protein}}
		& \textsc{Naive}  &  \meanstd{0.9000}{0.0063}  & \meanstd{2.9395}{0.2406} & \meanstd{7.7390}{2.6724}  \\
		&\textsc{Cov}  &  \meanstd{0.9006}{0.0064} &  \meanstd{2.6816}{0.1973} &  \meanstd{5.4533}{2.4981} \\
		&  \textsc{MVCS}  & \meanstd{0.9007}{0.0061} &  \meanstd{\textbf{2.5095}}{0.1924} &  \meanstd{5.5840}{2.5648}  \\
		\rowcolor{gray!20}
		& \textsc{Ours}: $0.1$  &   \meanstd{0.9007}{0.0056} & \meanstd{\underline{2.5316}}{0.1976}&  \meanstd{\underline{3.7681}}{1.3876} \\
		\rowcolor{gray!20}
		& \textsc{Ours}: $0.05$  &   \meanstd{0.9002}{0.0063} & \meanstd{2.5771}{0.1992}&  \meanstd{\textbf{3.3327}}{1.5972} \\

		\midrule
		\multirow{4}{*}{\rotatebox{90}{House}}
		& \textsc{Naive}  &  \meanstd{0.9019}{0.0060}  & \meanstd{2.3647}{0.0280} & \meanstd{2.4940}{0.1849}  \\
		&\textsc{Cov}  &  \meanstd{0.9018}{0.0073} &  \meanstd{2.1926}{0.0356} &  \meanstd{2.5127}{0.3122} \\
		&  \textsc{MVCS}  & \meanstd{0.9018}{0.0072} &  \meanstd{\textbf{2.1212}}{0.0354} &  \meanstd{2.5179}{0.3141}  \\
		\rowcolor{gray!20}
		& \textsc{Ours}: $0.1$  &   \meanstd{0.9012}{0.0062} & \meanstd{\underline{2.1458}}{0.0364}&  \meanstd{\underline{2.3191}}{0.2716} \\
		\rowcolor{gray!20}
		& \textsc{Ours}: $0.05$  &   \meanstd{0.9009}{0.0068} & \meanstd{2.1677}{0.03743}&  \meanstd{\textbf{2.2214}}{0.2889} \\
		\bottomrule
	\end{tabular}
\end{table}

\section{Conclusion}

We propose a tail-aware geometry learning framework for multivariate conformal prediction with ellipsoidal scores. Our approach decouples tail sensitivity from the target miscoverage level, using an independent parameter to control the shape of the learned geometry via a CVaR criterion. The resulting formulation is convex and admits a bounded-reweighting interpretation, revealing how upper-tail residuals shape the learned metric. We further characterize the trade-off between tail severity and ellipsoidal volume. Extensive experiments validate the effectiveness of the proposed method.

\bibliographystyle{IEEEtran}
\bibliography{refs}

\vfill\pagebreak
\section{Appendix}
\label{sec:appendix}
\subsection{Proof of Proposition~\ref{prop:minvol-cvar}}
\label{app:minvol-proof}

Recall that
\begin{equation}
	C_\tau(\Omega)
	=
	\max_{\eta\in\mathcal P_\tau}
	\left\langle
	\Omega,\Sigma_\eta
	\right\rangle,
	\qquad
	\Sigma_\eta
	=
	\sum_{m=1}^{M}
	\eta_m r_m r_m^\top,
	\label{eq:app-cvar-dual}
\end{equation}
where
\begin{equation}
	\mathcal P_\tau
	=
	\left\{
	\eta\in\mathbb R^M:
	\eta\ge0,\;
	\mathbf 1^\top\eta=1,\;
	\eta_m\le\frac{1}{\tau M},
	\;m\in[M]
	\right\}.
	\notag
\end{equation}
Throughout the proof, we assume $\tau\in(0,1]$ and that
$\{r_m\}_{m=1}^M$ spans $\mathbb R^d$. We consider the convex optimization problem
\begin{equation}
	\begin{aligned}
		\min_{\Omega\succ0}\;
		-\frac12\log\det\Omega, \quad
		\mathrm{s.t.}\;
		C_\tau(\Omega)\le1.
	\end{aligned}
	\label{eq:app-primal}
\end{equation}
Define the empirical residual scatter matrix as $	S \triangleq \frac1M\sum_{m=1}^M r_m r_m^\top.$ Since $\{r_m\}_{m=1}^M$ spans $\mathbb R^d$, we have $S\succ0.$ Consider the uniform weight vector $	\bar\eta = \left( \frac1M,\ldots,\frac1M \right).$ Since $\tau\in(0,1]$, we have $	\frac1M \le \frac{1}{\tau M},$ and hence $\bar\eta\in\mathcal P_\tau.$ Therefore, by the dual representation \eqref{eq:app-cvar-dual}, we have $	C_\tau(\Omega) \ge \left\langle \Omega,S \right\rangle.$

Let $\mu \triangleq \lambda_{\min}(S)>0.$ For every $\Omega\succeq0$, we obtain $\left\langle \Omega,S \right\rangle\ge \mu\,\operatorname{tr}(\Omega).$ Consequently, every feasible $\Omega$ satisfies
\begin{equation}
	\operatorname{tr}(\Omega)
	\le
	\frac1\mu.
	\label{eq:app-trace-bound}
\end{equation}
Thus the feasible set is bounded in the positive-semidefinite cone. The problem is also strictly feasible. Indeed, for any $\varepsilon>0$, $	C_\tau(\varepsilon I) = \varepsilon C_\tau(I),$ by positive homogeneity of $C_\tau$. Hence, for sufficiently small $C_\tau(\varepsilon I)<1.$ Therefore Slater's condition holds.

To establish attainment of the optimum, let
$\Omega_0\succ0$ be any strictly feasible point and let
$\{\Omega_n\}_{n\ge1}$ be a minimizing sequence such that, without
loss of generality, we have 
\begin{equation}
	-\frac12\log\det\Omega_n
	\le
	-\frac12\log\det\Omega_0.
\end{equation}
Equivalently, this leads to $\det(\Omega_n) \ge \det(\Omega_0)>0.$ Together with the trace bound \eqref{eq:app-trace-bound}, this implies that the eigenvalues of $\Omega_n$ are bounded above and bounded away from zero. Consequently, the minimizing sequence lies in a compact subset of the positive-definite cone. Since both $C_\tau(\Omega)$ and $-\log\det\Omega$ are continuous on this set, an optimal solution $\widehat\Omega_\tau\succ0$ exists. Moreover, the function $\Omega\mapsto-\log\det\Omega$ is strictly convex on the positive-definite cone, while the feasible
set is convex because $C_\tau$ is convex. Hence the optimizer $\widehat\Omega_\tau$ is unique.

We next explore the activity of the CVaR constraint. For every scalar $c>0$, the dual representation \eqref{eq:app-cvar-dual} gives
\begin{align}
	C_\tau(c\Omega) =
	\max_{\eta\in\mathcal P_\tau}
	\left\langle
	c\Omega,\Sigma_\eta
	\right\rangle =
	c
	\max_{\eta\in\mathcal P_\tau}
	\left\langle
	\Omega,\Sigma_\eta
	\right\rangle=
	cC_\tau(\Omega).
	\label{eq:app-homogeneity}
\end{align}
Thus $C_\tau$ is positively homogeneous. Suppose, for contradiction, that $	C_\tau(\widehat\Omega_\tau)<1.$ Then one can choose $c>1$ sufficiently close to one such that $cC_\tau(\widehat\Omega_\tau)\le1.$
By \eqref{eq:app-homogeneity},
$c\widehat\Omega_\tau$ remains feasible.

On the other hand,
\begin{align}
	-\frac12\log\det(c\widehat\Omega_\tau)
	&=
	-\frac12
	\log
	\left(
	c^d\det\widehat\Omega_\tau
	\right)
	\nonumber\\
	&=
	-\frac12\log\det\widehat\Omega_\tau
	-\frac d2\log c
	\nonumber\\
	&<
	-\frac12\log\det\widehat\Omega_\tau,
\end{align}
which contradicts the optimality of
$\widehat\Omega_\tau$. Therefore, we have 
\begin{equation}
	C_\tau(\widehat\Omega_\tau)=1.
	\label{eq:app-active}
\end{equation}

Then, we need to study the active CVaR weights and the subdifferential. Define the active set of CVaR weights at $\widehat\Omega_\tau$ by
\begin{equation}
	\mathcal A_\tau(\widehat\Omega_\tau)
	\triangleq
	\arg\max_{\eta\in\mathcal P_\tau}
	\left\langle
	\widehat\Omega_\tau,\Sigma_\eta
	\right\rangle.
	\label{eq:app-active-set}
\end{equation}
Since $\mathcal P_\tau$ is compact and the maximized objective is
continuous in $\eta$,
$\mathcal A_\tau(\widehat\Omega_\tau)$ is nonempty.
Moreover, because the objective is linear in $\eta$,
$\mathcal A_\tau(\widehat\Omega_\tau)$ is a convex face of
$\mathcal P_\tau$. Because $C_\tau(\Omega)$ is the pointwise maximum of linear
functions of $\Omega$, Danskin's theorem yields
\begin{equation}
	\partial C_\tau(\widehat\Omega_\tau)
	=
	\operatorname{conv}
	\left\{
	\Sigma_\eta:
	\eta\in
	\mathcal A_\tau(\widehat\Omega_\tau)
	\right\}.
	\label{eq:app-danskin}
\end{equation}
Since
$\mathcal A_\tau(\widehat\Omega_\tau)$ is convex and
$\eta\mapsto\Sigma_\eta$ is linear, the image set
\(
\left\{
\Sigma_\eta:
\eta\in
\mathcal A_\tau(\widehat\Omega_\tau)
\right\}
\)is itself convex. Hence, it leads to 
\begin{equation}
	\partial C_\tau(\widehat\Omega_\tau)
	=
	\left\{
	\Sigma_\eta:
	\eta\in
	\mathcal A_\tau(\widehat\Omega_\tau)
	\right\}.
	\label{eq:app-subgradient-active}
\end{equation}
Therefore every subgradient of $C_\tau$ at
$\widehat\Omega_\tau$ can be represented by the scatter matrix
associated with an active CVaR weight vector.

Finally, we derive the KKT conditions. Since Slater's condition holds, the KKT conditions are necessary and
sufficient for \eqref{eq:app-primal}. Therefore there exist a
multiplier $\lambda^\star\ge0$ and a matrix $	G^\star \in \partial C_\tau(\widehat\Omega_\tau)$ such that
\begin{equation}
	-\frac12\widehat\Omega_\tau^{-1} + \lambda^\star G^\star = 0.
	\label{eq:app-kkt}
\end{equation}
The multiplier must satisfy $\lambda^\star>0,$ because $\lambda^\star=0$ would imply $	\widehat\Omega_\tau^{-1}=0,$ which is impossible for
$\widehat\Omega_\tau\succ0$. By \eqref{eq:app-subgradient-active}, there exists $\eta^\star \in \mathcal A_\tau(\widehat\Omega_\tau)$ such that $	G^\star = \Sigma_{\eta^\star}.$ Thus \eqref{eq:app-kkt} becomes
\begin{equation}
	\widehat\Omega_\tau^{-1}
	=
	2\lambda^\star
	\Sigma_{\eta^\star}.
	\label{eq:app-before-normalization}
\end{equation}
Since $\eta^\star$ is active, we have $	\left\langle \widehat\Omega_\tau, \Sigma_{\eta^\star} \right\rangle = C_\tau(\widehat\Omega_\tau).$ Using \eqref{eq:app-active}, we have $	\left\langle \widehat\Omega_\tau, \Sigma_{\eta^\star} \right\rangle = 1.$ Multiplying \eqref{eq:app-before-normalization} by $\widehat\Omega_\tau$ and taking the trace gives
\begin{align}
	d
	&=
	\operatorname{tr}
	\left(
	\widehat\Omega_\tau
	\widehat\Omega_\tau^{-1}
	\right) =
	2\lambda^\star
	\operatorname{tr}
	\left(
	\widehat\Omega_\tau
	\Sigma_{\eta^\star}
	\right)
	\nonumber\\
	&=
	2\lambda^\star
	\left\langle
	\widehat\Omega_\tau,
	\Sigma_{\eta^\star}
	\right\rangle =
	2\lambda^\star.
	\label{eq:app-lambda}
\end{align}

Substituting \eqref{eq:app-lambda} into
\eqref{eq:app-before-normalization} yields
\begin{equation}
	\widehat\Omega_\tau^{-1}
	=
	d\,\Sigma_{\eta^\star}
	=
	d\sum_{m=1}^{M}
	\eta_m^\star r_m r_m^\top.
	\label{eq:app-inverse-final}
\end{equation}
Since $\widehat\Omega_\tau\succ0$,
\eqref{eq:app-inverse-final} implies $\Sigma_{\eta^\star}\succ0.$ Consequently, we obtain 
\begin{equation}
	\widehat\Omega_\tau
	=
	\frac1d
	\Sigma_{\eta^\star}^{-1}
	=
	\frac1d
	\left(
	\sum_{m=1}^{M}
	\eta_m^\star r_m r_m^\top
	\right)^{-1}.
	\label{eq:app-omega-final}
\end{equation}

The active weight vector $\eta^\star$ need not be unique.
Nevertheless, every KKT-compatible active weight vector satisfying
\eqref{eq:app-inverse-final} induces the same scatter matrix, namely
\begin{equation}
	\Sigma_{\eta^\star}
	=
	\frac1d\widehat\Omega_\tau^{-1}.
	\label{eq:app-scatter-unique}
\end{equation}
This proves the proposition.

\subsection{Proof of Proposition~\ref{thm:geometry-tail-tradeoff}}
For the fixed empirical residual sample, define the quadratic score associated with $\Omega\succ0$ as \( s_m(\Omega) = r_m^\top \Omega r_m, m\in[M]. \) 	For any $c>0$, we have $s_m(c\Omega) = c\,s_m(\Omega).$ 	Since multiplication by a positive scalar preserves the ordering of the empirical scores, both the empirical VaR and CVaR are positively homogeneous:
\begin{equation}
	V_\tau(c\Omega)
	=
	cV_\tau(\Omega),
	\qquad
	C_\tau(c\Omega)
	=
	cC_\tau(\Omega),
	\qquad c>0.
	\label{eq:tradeoff-risk-homogeneity}
\end{equation}

We first verify that the tail-severity ratios appearing in the proposition are well defined. In particular, the assumed existence of an optimal solution to the VaR-based problem \eqref{eq:P-V-tau} implies that
\begin{equation}
	V_\tau(\Omega)>0
	\qquad
	\text{for every } \Omega\succ0.
	\label{eq:tradeoff-var-positive}
\end{equation}
Indeed, suppose that there existed some $\widetilde{\Omega}\succ0$ such that $V_\tau(\widetilde{\Omega})=0.$ Then, by \eqref{eq:tradeoff-risk-homogeneity},
we have $ V_\tau(c\widetilde{\Omega}) = cV_\tau(\widetilde{\Omega}) = 0 \le 1 $ 	for every $c>0$. Hence $c\widetilde{\Omega}$ would remain feasible for the VaR-based problem for arbitrarily large $c$. However,
\begin{align}
	-\frac12\log\det(c\widetilde{\Omega})
	&=
	-\frac12
	\log\left(
	c^d\det(\widetilde{\Omega})
	\right)
	\nonumber\\
	&=
	-\frac12\log\det(\widetilde{\Omega})
	-
	\frac d2\log c
	\longrightarrow
	-\infty,
\end{align}
contradicting the assumed existence of an optimal solution to \eqref{eq:P-V-tau}. Therefore, \eqref{eq:tradeoff-var-positive} holds. In particular, $	V_\tau(\Omega_V^\star)>0, V_\tau(\Omega_C^\star)>0,$ and hence both $\Gamma_{\tau,V}$ and $\Gamma_{\tau,C}$ are well defined.

We next show that the corresponding risk constraints are active at the two optima. Suppose, for contradiction, that $V_\tau(\Omega_V^\star)<1.$ 	Since $V_\tau(\Omega_V^\star)>0$, there exists some $c>1$ sufficiently close to one such that $cV_\tau(\Omega_V^\star)\le1.$ By \eqref{eq:tradeoff-risk-homogeneity}, we have $	V_\tau(c\Omega_V^\star) = cV_\tau(\Omega_V^\star) \le1,$ 	so $c\Omega_V^\star$ is feasible for the VaR-based problem. On the other hand, we can obtain 
\begin{align}
	-\frac12\log\det(c\Omega_V^\star)
	&=
	-\frac12\log\det(\Omega_V^\star)
	-
	\frac d2\log c
	\nonumber\\
	&<
	-\frac12\log\det(\Omega_V^\star),
\end{align}
which contradicts the optimality of $\Omega_V^\star$. Therefore, this leads to 	\begin{equation}
	V_\tau(\Omega_V^\star)=1.
	\label{eq:tradeoff-var-active}
\end{equation}
The same scaling argument applied to the CVaR-based problem gives	
\begin{equation}
	C_\tau(\Omega_C^\star)=1.
	\label{eq:tradeoff-cvar-active}
\end{equation}
By the definition of $\Gamma_{\tau,C}$ and \eqref{eq:tradeoff-cvar-active}, we have 
\begin{equation}
	\Gamma_{\tau,C}
	=
	\frac{
		C_\tau(\Omega_C^\star)
	}{
		V_\tau(\Omega_C^\star)
	}
	=
	\frac{1}{
		V_\tau(\Omega_C^\star)
	},
	\label{eq:tradeoff-cvar-var}
\end{equation}
Using positive homogeneity,
\begin{align}
	V_\tau
	\left(
	\Gamma_{\tau,C}\Omega_C^\star
	\right)=
	\Gamma_{\tau,C}
	V_\tau(\Omega_C^\star)=
	1.
	\label{eq:tradeoff-c-to-v-feasible}
\end{align}
Therefore, $\Gamma_{\tau,C}\Omega_C^\star$ is feasible for the VaR-based problem \eqref{eq:P-V-tau}. By the optimality of $\Omega_V^\star$, we have
\begin{align}
	-\frac12\log\det(\Omega_V^\star)
	&\le
	-\frac12
	\log\det
	\left(
	\Gamma_{\tau,C}\Omega_C^\star
	\right)\notag\\
	\implies \det(\Omega_V^\star)
	&\ge
	\det
	\left(
	\Gamma_{\tau,C}\Omega_C^\star
	\right).
	\label{eq:tradeoff-det-lower}
\end{align}

We now express the two VaR-normalized ellipsoids in unit-threshold form. By \eqref{eq:tradeoff-var-active}, $\mathcal E_{\tau,V}=\left\{r:r^\top\Omega_V^\star r\le1\right\}$. Similarly, by \eqref{eq:tradeoff-cvar-var}, we have $ \mathcal E_{\tau,C}  = \left\{ r:  r^\top  \left( \Gamma_{\tau,C}\Omega_C^\star \right) r\le1\right\}. $ 	For any $A\succ0$, the ellipsoid $\mathcal E(A) =\left\{r\in\mathbb R^d:r^\top A r\le1 \right\}$ has volume $\operatorname{Vol}(\mathcal E(A)) =\lambda(\mathbb B_d) \det(A)^{-1/2},$ where $\lambda(\mathbb B_d)$ denotes the volume of the $d$-dimensional Euclidean unit ball. Consequently,
\begin{equation}
	\frac{
		\operatorname{Vol}(\mathcal E_{\tau,C})
	}{
		\operatorname{Vol}(\mathcal E_{\tau,V})
	}
	=
	\left(
	\frac{
		\det(\Omega_V^\star)
	}{
		\det(
		\Gamma_{\tau,C}\Omega_C^\star
		)
	}
	\right)^{1/2}.
	\label{eq:tradeoff-volume-ratio}
\end{equation}	
Combining \eqref{eq:tradeoff-volume-ratio} with
\eqref{eq:tradeoff-det-lower} gives
\begin{equation}
	\frac{
		\operatorname{Vol}(\mathcal E_{\tau,C})
	}{
		\operatorname{Vol}(\mathcal E_{\tau,V})
	}
	\ge1.
	\label{eq:tradeoff-volume-lower}
\end{equation}

We next establish the upper bound. From \eqref{eq:tradeoff-var-active} and the definition of $\Gamma_{\tau,V}$, we have $C_\tau(\Omega_V^\star)=\Gamma_{\tau,V} V_\tau(\Omega_V^\star)= \Gamma_{\tau,V}.$ 	Therefore, by positive homogeneity, $C_\tau\left(
\frac{\Omega_V^\star}{\Gamma_{\tau,V}}\right) =\frac{C_\tau(\Omega_V^\star)}{
	\Gamma_{\tau,V}}=1.$ Hence, $\Omega_V^\star/\Gamma_{\tau,V}$ is feasible for the CVaR-based problem \eqref{eq:P-C-tau}. By the optimality of $\Omega_C^\star$, we can obtain
\begin{equation}
	-\frac12\log\det(\Omega_C^\star)
	\le
	-\frac12
	\log\det
	\left(
	\frac{\Omega_V^\star}
	{\Gamma_{\tau,V}}
	\right),
\end{equation} 
which equivalently leads to 
\begin{align}
	\det(\Omega_C^\star)\ge
	\det
	\left(
	\frac{\Omega_V^\star}
	{\Gamma_{\tau,V}}
	\right)=
	\Gamma_{\tau,V}^{-d}
	\det(\Omega_V^\star).
	\label{eq:tradeoff-det-upper-pre}
\end{align}
Multiplying both sides by $\Gamma_{\tau,C}^{d}$ yields
\begin{equation}
	\det
	\left(
	\Gamma_{\tau,C}\Omega_C^\star
	\right)
	\ge
	\left(
	\frac{
		\Gamma_{\tau,C}
	}{
		\Gamma_{\tau,V}
	}
	\right)^d
	\det(\Omega_V^\star).
	\label{eq:tradeoff-det-upper}
\end{equation}
Substituting \eqref{eq:tradeoff-det-upper} into
\eqref{eq:tradeoff-volume-ratio}, we obtain
\begin{align}
	\frac{
		\operatorname{Vol}(\mathcal E_{\tau,C})
	}{
		\operatorname{Vol}(\mathcal E_{\tau,V})
	}
	&\le
	\left[
	\frac{
		\det(\Omega_V^\star)
	}{
		\left(
		\Gamma_{\tau,C}/\Gamma_{\tau,V}
		\right)^d
		\det(\Omega_V^\star)
	}
	\right]^{1/2}
	\nonumber\\
	&=
	\left(
	\frac{
		\Gamma_{\tau,V}
	}{
		\Gamma_{\tau,C}
	}
	\right)^{d/2}.
	\label{eq:tradeoff-volume-upper}
\end{align}	
Combining \eqref{eq:tradeoff-volume-lower} and
\eqref{eq:tradeoff-volume-upper} gives
\begin{equation}
	1 \le \frac{ \operatorname{Vol}(\mathcal E_{\tau,C})}{ \operatorname{Vol}(\mathcal E_{\tau,V}) } \le\left(\frac{\Gamma_{\tau,V} }{ \Gamma_{\tau,C} }\right)^{d/2}.
	\label{eq:tradeoff-volume-final}
\end{equation}
This proves \eqref{eq:geometry-tail-bound}.

Moreover, since the volume ratio in \eqref{eq:tradeoff-volume-final} is at least one, the upper bound immediately implies $1\le
\left(\frac{\Gamma_{\tau,V}}{\Gamma_{\tau,C}}\right)^{d/2},$ and therefore $\Gamma_{\tau,C}\le \Gamma_{\tau,V}$. 	Thus, the CVaR-optimal geometry has a tail-severity ratio no larger than that of the VaR-optimal geometry.

Finally, suppose that $	\operatorname{Vol}(\mathcal E_{\tau,C}) =
\rho\,\operatorname{Vol}(\mathcal E_{\tau,V}), \rho\ge1.$ 	Then \eqref{eq:tradeoff-volume-upper} gives $\rho\le\left(\frac{\Gamma_{\tau,V}}{\Gamma_{\tau,C}} \right)^{d/2}.$ 	Raising both sides to the power $2/d$ gives
$\rho^{2/d}\le\frac{\Gamma_{\tau,V}}{\Gamma_{\tau,C}}.$ Rearranging yields $\Gamma_{\tau,C}\le\rho^{-2/d} \Gamma_{\tau,V}$. 	This proves \eqref{eq:gamma-volume-relation}.

\subsection{Details of Real-data Experiments}

For the Protein and House datasets, we construct response variables following \cite{feldman2023calibrated}. For all datasets, we first reserve half of the data to train the base predictor. The remaining data are further partitioned into training, calibration and test subsets with a ratio of 0.4, 0.3 and 0.3 for each trial. The training subset serves to estimate \(\Omega\),  while the calibration set is utilized for conformal prediction.

\subsubsection{Dataset details} 
The details of the used dataset are summarized as Table. \ref{tab:real dataset}
\begin{table}[h]
	\centering \footnotesize
	\caption{Experimental results on real data.}
	\label{tab:real dataset}
	\setlength{\tabcolsep}{4pt}
	\renewcommand{\arraystretch}{1.1}
	\begin{tabular}{cccc}
		\toprule
		Dataset  & \# Samples & Dimension of covariate & Dimension of responses  \\
		\midrule
		Gas  & 36733   & 9  & 2\\
		Protein  & 45730  &  8 & 2\\
		House  & 21613  & 18  & 2\\
		\bottomrule
	\end{tabular}
\end{table}

\subsubsection{Architecture and training of base predictor}

For all dataset, we simply train a 2-layer MLP and the size of the two layer are both 32. The activation function is  \textsc{ReLU}, solver is \textsc{adam}, and learning rate is 0.001.

\end{document}